\pdfoutput=1

\documentclass{article}

\usepackage{microtype}
\usepackage{graphicx}
\usepackage{subcaption}
\usepackage{booktabs}
\usepackage{times}
\usepackage[numbers]{natbib}
\usepackage{authblk}

\usepackage{hyperref}

\usepackage{amsmath,amsfonts,bm}

\def\eqref#1{equation~\ref{#1}}

\def\1{\bm{1}}

\DeclareMathAlphabet{\mathsfit}{\encodingdefault}{\sfdefault}{m}{sl}
\SetMathAlphabet{\mathsfit}{bold}{\encodingdefault}{\sfdefault}{bx}{n}

\newcommand{\E}{\mathbb{E}}

\newcommand{\R}{\mathbb{R}}

\usepackage{amsmath,amsthm,amsfonts,amssymb}
\usepackage{mathtools}
\usepackage{fullpage}
\usepackage{geometry}
\usepackage{algorithm}
\usepackage{algorithmic}
\usepackage{float}

\usepackage[capitalize,noabbrev]{cleveref}

\usepackage{xcolor}
\usepackage{pifont}
\usepackage{enumitem}
\setlist[enumerate]{nosep}

\hypersetup{colorlinks=true, linkcolor=blue, citecolor=blue, urlcolor=blue}

\newcommand{\cmark}{\textcolor{green!70!black}{\ding{51}}}
\newcommand{\xmark}{\textcolor{red!80!black}{\ding{55}}}
\newcommand{\pmark}{\textcolor{orange!80!black}{$\boldsymbol{\sim}$}}  

\theoremstyle{plain}
\newtheorem{theorem}{Theorem}[section]
\newtheorem{proposition}[theorem]{Proposition}
\newtheorem{lemma}[theorem]{Lemma}
\newtheorem{corollary}[theorem]{Corollary}
\theoremstyle{definition}
\newtheorem{definition}[theorem]{Definition}

\theoremstyle{remark}

\usepackage{tcolorbox}
\tcbuselibrary{skins,breakable}

\newtcolorbox{calloutbox}{
  colback=gray!05,
  colframe=gray!10,
  coltitle=black,
  boxrule=0pt,
  title=\textit{Not} Worst Case Analysis,
  fonttitle=\bfseries,
  boxsep=2pt,
  left=2pt,
  right=2pt,
  top=2pt,
  bottom=2pt
}

\newtcolorbox{scopebox}{
  colback=gray!05,
  colframe=gray!10,
  coltitle=black,
  boxrule=0pt,
  title=Scope of the A-BB guarantee,
  fonttitle=\bfseries,
  boxsep=2pt,
  left=2pt,
  right=2pt,
  top=2pt,
  bottom=2pt
}

\usepackage{thm-restate}

\title{Towards Provably Unbiased LLM Judges via Bias-Bounded Evaluation}

\author[1,2]{Benjamin Feuer}
\author[3]{Lucas Rosenblatt}
\author[2]{Oussama Elachqar}
\affil[1]{Stanford University}
\affil[2]{Oumi.AI}
\affil[3]{New York University}
\date{}

\begin{document}

\maketitle

\begin{abstract}
As AI models progress beyond simple chatbots into more complex workflows, we draw ever closer to the event horizon beyond which AI systems will be utilized in autonomous, self-maintaining feedback loops. Any autonomous AI system will depend on automated, verifiable rewards and feedback; in settings where ground truth is sparse or non-deterministic, one practical source of such rewards is an LLM-as-a-Judge. Although LLM judges continue to improve, the literature has yet to introduce systems capable of enforcing standards with strong guarantees, particularly when bias vectors are unknown or adversarially discovered. To remedy this issue, we propose \textit{average bias-boundedness} (A-BB), an algorithmic framework which formally guarantees reductions of harm/impact as a result of any measurable bias in an LLM judge. Evaluating on Arena-Hard-Auto with four LLM judges, we achieve ($\tau=0.5$, $\delta=0.01$)-bias-bounded guarantees while retaining 61–99\% correlation with original rankings across formatting and schematic bias settings, with most judge-bias combinations exceeding 80\%. The code to reproduce our findings is available at \url{https://github.com/penfever/bias-bounded-evaluation}.
\end{abstract}

\begin{figure*}[htbp]
\centering
\includegraphics[width=0.9\textwidth]{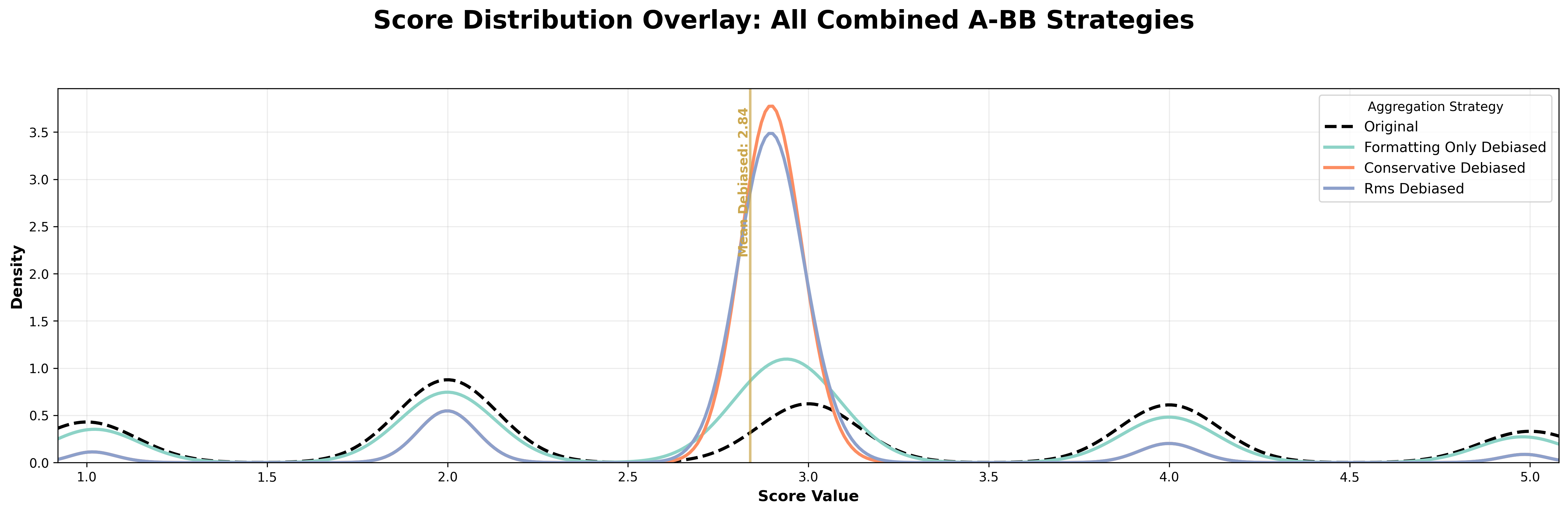}
\caption{\textbf{Bias-bounded evaluation constrains the impact of harms in judge scoring.} This before-and-after visualization of the score distributions from an LLM judge (Likert-scale) on the popular Arena-Hard-Auto benchmark shows how true uncertainty can be captured via a compacted score distribution. After the average bias-boundedness (A-BB) algorithm is applied, the original integer-valued score are transformed into a debiased, continuous  trajectory which accurately represents the measured uncertainty of the evaluation. The plot shows a KDE density map of the score distribution before and after transformation, with a conservative $\tau = 0.5, \delta = 0.03, \texttt{dim}=500$, averaged across a panel of four judges.}
\label{fig:combined_distribution}
\end{figure*}

\section{Introduction}

Agentic workflows are an increasingly popular method of deploying large language models (LLMs) in complex, real-world settings; it appears increasingly likely that these agentic systems will soon be used in autonomous, self-maintaining feedback loops \citep{plaat2025agentic}. This very year, for instance, the 2026 AAAI Conference added an LLM judge (AI-Powered Peer Review System) to the panel of reviewers for its scientific submissions, with mixed results~\cite{aaai2025llm}. In autonomous deployments, the risk of harm is greatly magnified; for instance, the well-known company Replit.AI recently suffered an incident in which a client company database (with thousands of entries) was accidentally deleted by an AI agent~\citep{tyson2025replit}.

AI agents will not be safe to deploy autonomously in most contexts until we have verifiable rewards and feedback. In settings where ground truth is sparse or difficult to parse, the most practical source for such feedback is an LLM-as-a-Judge \citep{zhugeagent}; by instantiating a (potentially airgapped) judgment stage into the  agentic feedback loop, it should, in principle, be possible to detect and mitigate failures in real time.

Unfortunately, recent works show that LLM judges, in naive deployments, present a range of distinct failure modes~\citep{feuer2024styleoutweighs,wang2024justice,chen2024preference,pezeshkpour2024trust,li2024limits}. Worse still, it remains unclear to what degree these effects are cumulative when acting in concert. Last but not least, when it comes to bias, we cannot help to "know what we don't know" -- there will always remain bias sources, including adversarially discovered sources, that we are capable of \textit{measuring}, but not \textit{explaining}.

In order to mitigate these issues, we propose a novel framework: \textbf{bias-bounded evaluation (BBE)} in \cref{sec:bbe}. At a high level, BBE measures how sensitive an LLM judge is to biases and then \textit{renders that sensitivity in the judgment scores}. Biases can be very simple (e.g. varying presentation order or format) or arbitrarily complex; the only limitation in this framework is that a bias must be subject to a valid and reliable measure, in the units of judgment, in order to be targeted. BBE injects calibrated Gaussian noise to the judge's scores to \textit{mitigate} the effects of the bias, using a novel mechanism we call \textbf{average bias-boundedness (A-BB)}, described in detail in \cref{sec:abb}. An attentive reader may notice that our mechanism takes inspiration from techniques in differential privacy~\citep{dwork2006calibrating}, though our analysis differs substantially, as does our goal.

\paragraph{Contributions} Concretely, for any fixed judgment space, bias space and rubric factor, we describe a mechanism that can, with high probability, formally bound the likelihood that the average-case bias will exceed a particular quantity. In particular, we consider our contributions as follows:
\begin{enumerate}
    \item We propose \textbf{bias-bounded evaluation}, an algorithmic framework which formally guarantees reductions of harm/impact as a result of any measurable biases in an LLM judge, even when the causes of those biases are complex, intersecting or unknown.
    \item We demonstrate empirically that BBE can retain signal and provide guarantees in realistic settings with large amounts of bias.
    \item We release our codebase with a working implementation suitable for future development.\footnote{Code available at \url{https://github.com/penfever/bias-bounded-evaluation}.}
\end{enumerate}

\section{Bias-Bounded Evaluation}
\label{sec:bbe}

Our assumptions in this analysis are light; we assume access to some real-valued Judgment Space, and that an ideal judgment, for a particular judge, exists for that space. The ideal judgment is not necessarily correct; it simply represents the set of judgments the judge would have rendered, conditioned on the inputs, in the absence of unwanted bias. As we have discussed, bias in LLM-based evaluations can arise in many subtle ways, from formatting to presentation order to some other latent, unavoidable rubric factors. These sources of bias introduce systematic deviations from what an ideal judgment would be. Inspired by the idea of leveraging \textit{noise} to bound \textit{sensitivity} to perturbations of an output vector (as is standard with worst-case guarantees in differential privacy \cite{dwork2006calibrating}), we introduce \textbf{bias-bounded evaluation (BBE)}. Bias-bounded evaluation measures how sensitive a judge is to contextual perturbations, and then injects calibrated Gaussian noise to mitigate the effects of the bias. We make the bias \textit{and} the noise magnitude quantifiable, so that BBE surfaces and mitigates epistemic uncertainty in automated evaluation in a controlled manner.

\subsection{Basic Definitions}

We begin by establishing the mathematical framework for our analysis.

\begin{definition}[Judgment Space]
Let $\mathcal{J}$ be the space of all possible judgments, where judgment $j \in \mathcal{J}$ is a vector of scores: $j = (s_1, s_2, \ldots, s_k, s_{overall})$ and $s_i$ represents the score for factor $i$ and $s_{overall}$ is the overall judgment score.
\end{definition}

\begin{definition}[Rubric Factors]
Let $\mathcal{R} = \{r_1, r_2, \ldots, r_k\}$ be the set of explicit rubric factors, where each $r_i: \mathcal{P} \rightarrow \mathbb{R}$ maps a prompt-response pair to a real-valued score.
\end{definition}

\begin{definition}[Bias Space]
Let $\mathcal{B}$ be the space of all possible bias functions, where each $b \in \mathcal{B}$ represents a systematic deviation from ideal judgment based on factors not captured in $\mathcal{R}$.
\end{definition}

\subsection{Judgment Context and Neighboring Contexts}

\begin{definition}[Judgment Context]
A judgment context $D$ is a dataset that contains: (1.) A set of prompt-response pairs $\{(p_i, r_i)\}_{i=1}^n$, (2.) An associated ground truth labels or reference judgments (if available), and (3.) Environmental factors that may influence judgment (presentation order, formatting, etc.)
\end{definition}

\begin{definition}[Neighboring Judgment Contexts]
Two judgment contexts $D$ and $D'$ are neighboring (denoted $D \sim D'$) if they differ in exactly one prompt-response pair, where that pair has been modified by a bias-introducing perturbation that preserves germane semantic content (i.e., content pertaining to the rubric criteria). Examples include: (i) \textit{formatting perturbations} such as reformatting or stylistic paraphrasing; (ii) \textit{schematic perturbations} that vary emphasis on rubric factors without changing the response; and (iii) \textit{agreeableness perturbations} that introduce subtle errors to test whether the judge detects flaws.
\end{definition}

\section{Average Bias-Bounded Evaluation (A-BB)}
\label{sec:abb}

We consider a mechanism that maps a \textit{fixed} judgment context to a score vector. This can be a highly sensitive mapping, and even small changes in this input context can shift output scores; this ``implicit bias'' is what we'd like to control.

\begin{calloutbox}
Under a \textit{worst-case} style analysis (as is standard with e.g. global differential privacy and many of its variants), one tries to bound the effect of \textit{any} perturbation (across \textit{any} neighboring dataset) \citep{dwork2014algorithmic, cummings2024advancing}. This is inherently too conservative in the setting of LLM judgements and contexts, as it tunes the mechanism to the worst possible change (which could be on the order of the full judgement scale), and forces one to consider \textit{any} two judgement contexts (that can be adversarially chosen).
\end{calloutbox}


Instead, we consider an \textbf{\textit{average-case} bias-bounded guarantee}. Also, instead of considering \textit{any} two judgement contexts, we allow ourselves to \textit{fix} a judgement context, as we generally are assumed to control both the judge and application setting and thus do not have to be sensitive to a potentially adversarial selection of judgement context.

\paragraph{Defining A-BB}
For \textit{average case} bias boundedness, we \textit{fix} a judgement context $D$.\footnote{This is akin to differentially private \textit{local} sensitivity \citep{nissim2007smooth, dwork2009differential}.} Then, we instantiate a function $T$ to be a randomized \textit{neighbor generator}. Let $T: \mathcal{D} \to \mathcal{D}$. Let $D' \underset{T}{\sim} D$ be a neighboring dataset resulting from a random draw from the generator, $D' = T(D)$. The randomness here could come from any measurable source(s) of bias. Furthermore, we consider a root-mean squared error, which is convenient due to positivity (a similar choice is often made when doing $\ell_2$ smooth-sensitivity analysis in differential privacy\footnote{Additionally, we note that if $T$ truly sampled uniformly from the Hamming-1 token neighbors of the judgement context, then the root-mean-squared sensitivity would be exactly the \textit{average-case} analogue of the local sensitivity (used by smooth sensitivity).}).
\begin{definition}[Root-mean-squared sensitivity] \label{def:rms}
For a deterministic judge $f: \mathcal{D} \to \mathbb{R}^d$ and fixed context $D$, we define,
\begin{align*}
\Delta^{*}_2(f,D) = \left( \mathbb{E}_{D' \underset{T}{\sim} D} [ \|f(D) - f(D')\|_2^2 ] \right)^{1/2}~.
\end{align*}
\end{definition}

\begin{definition}[Average bias boundedness (A-BB)] \label{def:a-bb}
For a fixed judgement context $D$, neighbor generator $T$, tolerance $\tau > 0$, and failure probability $\delta \in (0,1)$, we say a randomized mechanism $M: \mathcal{D} \mapsto \mathbb{R}^d$ is \textit{$(\tau, \delta)$-average bias-bounded} if,
\begin{align*}
\Pr[ \|M(D) - M(D')\|_2 > \tau ] \leq \delta,
\end{align*}
where the probability is taken over both the randomness of the neighbor generator $T$ (i.e., the draw $D' \underset{T}{\sim} D$) and the internal randomness of mechanism $M$.
\end{definition}
In other words, we will seek to control the \textit{tail probability} that a random perturbation (drawn from neighbor generator $T$) together with the randomness that comes from the noise mechanism $M$ causes more than a $\tau$ change in the output.

\begin{scopebox}
We stress that the A-BB certificate stated above is \textit{local} to the fixed judgment context $D$ and the specific neighbor generator $T$ chosen for certification (as well as any unknown generators whose biasing effects are no larger than $T$); it is \textit{not} a distributional generalization guarantee over unseen contexts. For example, later in our experiments, $D$ will correspond to the batch of Arena-Hard-Auto comparisons judged by a fixed judge, and $T$ will sample from allowed contextual perturbations within that batch (see \Cref{sec:experiments} for exact details).
\end{scopebox}

We now give the mechanism and analysis, along with the noise addition stated in algorithmic form in \Cref{alg:gauss_mech}. Note that in what follows we assume $f$ is deterministic and that the mechanism noises $Z,~Z'$ are independent of each other and of $D' \underset{T}{\sim} D$ (i.e., any base-judge randomness is either folded into $f$ or modeled via $Z$).

\begin{restatable}[Gaussian mech. for A-BB: a baseline split bound]{theorem}{GaussMechABB}
\label{thm:gauss_mech_abb}
Consider a judgement context $D$ and a neighbor generator $T$. Let $\Delta := f(D) - f(D')$ where $D' \underset{T}{\sim} D$. Let $M_\sigma(D) = f(D) + Z$ with $Z \sim \mathcal{N}(0, \sigma^2 I_d)$, and further let $B := Z - Z'$, where we set $Z'$ as an independent copy of $Z$, so $B \sim \mathcal N(0, 2\sigma^2 I_d)$. Then for any $\delta_B \in (0,1)$ and any threshold $a \in (0, \tau)$, if
\begin{align*}
\sqrt{2 \sigma^2 \left(d + 2\sqrt{d \log(1 / \delta_B)} + 2\log(1 / \delta_B) \right)} \leq \tau - a~,
\end{align*}
or, equivalently, for any $\sigma$ in the admissible interval $0 < \sigma \leq \sigma_{\max}$ where,
\begin{align*}
\sigma_{\max} := \frac{\tau - a}{\sqrt{2 \left(d + 2\sqrt{d \log(1 / \delta_B)} + 2\log(1 / \delta_B) \right)}}~,
\end{align*}
$M_\sigma(D)$ satisfies $(\tau, \delta)$-average bias boundedness (\Cref{def:a-bb}) with
\begin{align*}
\delta = \delta_B + \frac{\Delta_2^{*} (f, D)^2}{a^2}~.
\end{align*}
\end{restatable}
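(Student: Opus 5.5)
The plan is to write the output difference as a sum of a data term and a noise term, and control each tail separately via a union bound. With $M_\sigma(D)=f(D)+Z$ and $M_\sigma(D')=f(D')+Z'$, we have
\begin{align*}
M_\sigma(D)-M_\sigma(D') = \Delta + B,
\end{align*}
where $\Delta = f(D)-f(D')$ depends only on the draw $D' \underset{T}{\sim} D$ and $B=Z-Z'\sim\mathcal N(0,2\sigma^2 I_d)$ is independent of it. By the triangle inequality, $\|\Delta+B\|_2 \le \|\Delta\|_2 + \|B\|_2$. So on the event $\{\|\Delta\|_2\le a\}\cap\{\|B\|_2\le \tau-a\}$ the output change is at most $\tau$. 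Hence
\begin{align*}
\Pr[\|M_\sigma(D)-M_\sigma(D')\|_2>\tau] \le \Pr[\|\Delta\|_2>a] + \Pr[\|B\|_2>\tau-a].
\end{align*}
This split is where the free parameter $a\in(0,\tau)$ enters.

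For the data term, I would apply Markov's inequality to the nonnegative variable $\|\Delta\|_2^2$:
\begin{align*}
\Pr[\|\Delta\|_2>a] = \Pr[\|\Delta\|_2^2>a^2] \le \frac{\mathbb E\|\Delta\|_2^2}{a^2} = \frac{\Delta_2^*(f,D)^2}{a^2},
\end{align*}
using \Cref{def:rms} directly. This step is routine; determinism of $f$ ensures $\Delta$ is a function of $D'$ alone.

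For the noise term, write $B=\sqrt{2}\sigma G$ with $G\sim\mathcal N(0,I_d)$, so $\|B\|_2^2 = 2\sigma^2\chi^2_d$. I would invoke the Laurent--Massart chi-square tail bound,
\begin{align*}
\Pr\!\left[\chi^2_d \ge d+2\sqrt{dx}+2x\right]\le e^{-x},
\end{align*}
with $x=\log(1/\delta_B)$. This gives, with probability at least $1-\delta_B$,
\begin{align*}
\|B\|_2 \le \sqrt{2\sigma^2\left(d+2\sqrt{d\log(1/\delta_B)}+2\log(1/\delta_B)\right)},
\end{align*}
and by hypothesis the right-hand side is at most $\tau-a$. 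Therefore $\Pr[\|B\|_2>\tau-a]\le\delta_B$. The equivalence with $\sigma\le\sigma_{\max}$ follows by solving the hypothesis for $\sigma$; the left side is monotone in $\sigma>0$.

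The main obstacle is only the careful citation of the chi-square concentration inequality in exactly this form (Laurent--Massart 2000, Lemma 1), together with the strict versus non-strict inequality bookkeeping. One also needs to note that the probability in \Cref{def:a-bb} is joint over $T$ and the noise, which is handled because the union bound needs no independence. Combining the two tail bounds gives $\delta=\delta_B+\Delta_2^*(f,D)^2/a^2$.
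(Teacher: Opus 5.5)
Your proposal is correct and follows the paper's proof essentially step for step. The paper also splits the event via the triangle inequality and a union bound, applies Markov's inequality to $\|\Delta\|_2^2$ to get $\Delta_2^*(f,D)^2/a^2$, and applies the Laurent--Massart bound to $\|B\|_2^2/(2\sigma^2)\sim\chi^2_d$ with $x=\log(1/\delta_B)$ to get $\delta_B$.
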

We defer the proof of \Cref{thm:gauss_mech_abb} to \Cref{sec:deferred_proofs}.

Setting $a = \Delta_2^*(f,D) / \sqrt{\delta_\Delta}$ yields an explicit split, which we state below in \Cref{cor:split_failure_budget} (this is a direct application of \Cref{thm:gauss_mech_abb}, and again we defer the short proof to \Cref{sec:deferred_proofs}).

\begin{restatable}[Splitting the failure budget]{corollary}{SplitFailureBudget}
\label{cor:split_failure_budget}
Fix $\tau > 0$ and a split $\delta = \delta_B + \delta_\Delta$ with $\delta_B, \delta_\Delta \in (0,1)$.
Assume $\tau > \Delta^*_2(f,D)/\sqrt{\delta_\Delta}$. Then the Gaussian mechanism
$M_\sigma(D) = f(D) + Z$, $Z \sim \mathcal N(0, \sigma^2 I_d)$, satisfies $(\tau, \delta)$-A-BB
for any $0 < \sigma \leq \sigma_{\max}$ where,
\begin{align}
\sigma_{\max} = \frac{\tau - \Delta^*_2(f,D) / \sqrt{\delta_\Delta}}
{\sqrt{2 \left(d + 2 \sqrt{d \log(1 / \delta_B)} + 2\log(1 / \delta_B)\right)}}~.
\end{align}
\end{restatable}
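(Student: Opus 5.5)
The plan is to apply \Cref{thm:gauss_mech_abb} directly, with the free threshold chosen as $a := \Delta^*_2(f,D)/\sqrt{\delta_\Delta}$, and then check that every hypothesis of the theorem holds and that the resulting failure probability equals $\delta_B + \delta_\Delta$.

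\textbf{Step 1: the threshold lies in the admissible range.} The theorem requires $a \in (0,\tau)$. The upper bound $a < \tau$ is exactly the assumption $\tau > \Delta^*_2(f,D)/\sqrt{\delta_\Delta}$. For positivity, if $\Delta^*_2(f,D) > 0$ then $a > 0$ immediately.

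The degenerate case $\Delta^*_2(f,D) = 0$ needs separate treatment, and it is the only step needing any care. In that case $\Delta = 0$ almost surely. I would handle it in one of two ways:
\begin{itemize}
\item rerun the theorem's argument with an arbitrarily small $a>0$, so that the Markov term $\Delta_2^{*}(f,D)^2/a^2$ vanishes; or
\item observe directly that $\|M(D)-M(D')\|_2 = \|B\|_2$, so only the Gaussian concentration part of the theorem is needed, with threshold $\tau$.
\end{itemize}
Since $\sigma_{\max}$ with $a = 0$ is no larger than its value at $\tau$, the stated $\sigma$ range still works.

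\textbf{Step 2: the noise condition matches.} Substituting this $a$ into the theorem's $\sigma_{\max}$ expression gives verbatim the corollary's $\sigma_{\max}$. Hence any $0<\sigma\le\sigma_{\max}$ satisfies the theorem's hypothesis
\[
\sqrt{2\sigma^2\bigl(d+2\sqrt{d\log(1/\delta_B)}+2\log(1/\delta_B)\bigr)} \le \tau - a .
\]

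\textbf{Step 3: the failure budget adds up.} The theorem yields $(\tau,\delta')$-A-BB with
\[
\delta' = \delta_B + \frac{\Delta_2^{*}(f,D)^2}{a^2} = \delta_B + \delta_\Delta = \delta .
\]
This is precisely the claimed guarantee.
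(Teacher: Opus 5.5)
Your proposal is correct and matches the paper's proof: it invokes \Cref{thm:gauss_mech_abb} with $a = \Delta^*_2(f,D)/\sqrt{\delta_\Delta}$, so the Markov term becomes exactly $\delta_\Delta$ and the noise condition reduces to the stated $\sigma_{\max}$. Your separate handling of $\Delta^*_2(f,D)=0$ (where $a=0\notin(0,\tau)$) is extra care the paper omits, and of your two options the second one, bounding $\Pr[\|B\|_2>\tau]\le\delta_B$ directly, is the clean one; the first would need a limiting argument, since no $a>0$ satisfies the theorem's noise condition when $\sigma=\sigma_{\max}$.
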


\begin{corollary}[Symmetric split]
If we let $\delta_B = \delta_\Delta = \delta / 2$ in~\Cref{cor:split_failure_budget}, we get,
\begin{align*}
\sigma \leq \frac{ \tau - \Delta_2^{*}(f, D)\sqrt{2 / \delta} }{\sqrt{2 \left(d + 2\sqrt{d \log(2 / \delta)} + 2\log(2 / \delta)\right)}}~,
\end{align*}
so long as $\tau > \Delta_2^{*}(f, D)\sqrt{2 / \delta}$.
\end{corollary}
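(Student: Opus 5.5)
The symmetric split is a direct specialization of \Cref{cor:split_failure_budget}, so the plan is simply to instantiate that corollary with $\delta_B = \delta_\Delta = \delta/2$ and verify that its hypotheses and conclusion take the stated form. First, we check admissibility of the split: since $\delta \in (0,1)$, we have $\delta/2 \in (0,1/2) \subset (0,1)$, so both $\delta_B$ and $\delta_\Delta$ lie in $(0,1)$, and $\delta_B + \delta_\Delta = \delta$ as required.

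Next, we substitute into the two ingredients of \Cref{cor:split_failure_budget}. For the bias term, $\Delta_2^*(f,D)/\sqrt{\delta_\Delta} = \Delta_2^*(f,D)/\sqrt{\delta/2} = \Delta_2^*(f,D)\sqrt{2/\delta}$. The corollary's hypothesis $\tau > \Delta_2^*(f,D)/\sqrt{\delta_\Delta}$ therefore becomes exactly the stated condition $\tau > \Delta_2^*(f,D)\sqrt{2/\delta}$. For the Gaussian tail term, $\log(1/\delta_B) = \log(2/\delta)$, so the denominator becomes $\sqrt{2(d + 2\sqrt{d\log(2/\delta)} + 2\log(2/\delta))}$.

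Plugging both pieces into the expression for $\sigma_{\max}$ gives the displayed bound. Any $0 < \sigma \le \sigma_{\max}$ then yields $(\tau,\delta)$-A-BB by \Cref{cor:split_failure_budget}. The numerator is strictly positive under the stated assumption, so the admissible interval for $\sigma$ is nonempty.

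There is no real obstacle here; the only point requiring care is confirming that the positivity condition on $\tau$ transfers exactly, which it does by the algebraic identity $1/\sqrt{\delta/2} = \sqrt{2/\delta}$.
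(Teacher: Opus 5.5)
Your proposal is correct and matches the paper's approach. The paper gives no separate proof and treats the symmetric split as the direct instantiation $\delta_B = \delta_\Delta = \delta/2$ of \Cref{cor:split_failure_budget}, via $1/\sqrt{\delta/2} = \sqrt{2/\delta}$ and $\log(1/\delta_B) = \log(2/\delta)$, which is exactly your substitution (your check that $\delta/2 \in (0,1)$ is a harmless extra).
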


In \Cref{rem:splitting_helps}, we consider how by setting $\Delta = a$, the split calibration tightens the Gaussian radius relative to a no-split bound at the same $\tau$. In \Cref{rem:utility}, for convenience we give the standard 2-norm utility expectation result of adding $d$-dimensional Gaussian noise in this manner. Short proofs again deferred to \Cref{sec:deferred_proofs}.

\begin{restatable}[Why splitting helps]{remark}{WhySplittingHelps}
\label{rem:splitting_helps}
Note that, for any $\tau > 0$ and any $\Delta \in [0, \tau)$, we have that,
\begin{align*}
\frac{\tau - \Delta}{\sqrt{\tau^2 - \Delta^2}} = \sqrt{\frac{\tau - \Delta}{\tau + \Delta}} < 1~.
\end{align*}
\end{restatable}

\begin{restatable}[Utility]{remark}{Utility} \label{rem:utility}
The following is a standard fact of adding Gaussian noise in $d$ dimensions,
\begin{align*}
\mathbb{E}\| M_\sigma(D) - f(D) \|_2^{2} = d \sigma^{2}~.
\end{align*}
\end{restatable}

\textbf{Overall}, in the above average case analysis, we try to shrink $\sigma$ (the variance of the Gaussian noise we add) analytically \textit{as much as possible} such that we can still get a meaningful guarantee on the tail probability for the A-BB statement. For convenience, we also state the procedure in \Cref{alg:gauss_mech} below, with the correct splitting and $\sigma$ parameter; this is the exact procedure we use in our experimentation.

\begin{algorithm}[tb]
\caption{A-BB Gaussian Mechanism} \label{alg:gauss_mech}
\begin{algorithmic}[1]
\REQUIRE Base judge $f$, judgment context $D$, neighbor generator $T$, params $(\tau, \delta)$
\ENSURE $(\tau,\delta)$-average bias-bounded judgment $j'$
\STATE First compute raw judgment $j = f(D)$
\STATE Then, estimate root-mean-squared sensitivity, by
\STATE \quad Sampling $m$ neighbors $D'_1, \dots, D'_m \underset{T}{\sim} D$
\STATE \quad To assign $\Delta_2^{*}(f,D) \leftarrow ( \frac{1}{m} \sum_{i=1}^m \|f(D) - f(D'_i)\|_2^2 )^{1/2}$
\STATE Then, split failure budget as $\delta_B = \delta / 2$, $\delta_\Delta = \delta / 2$
\STATE Ensure that we meet requirement of $\tau > \Delta_2^{*}(f,D) \sqrt{2 / \delta}$
\STATE Then, compute the maximal admissible noise parameter, or
\STATE \quad $\sigma \leftarrow \frac{\tau - \Delta_2^{*}(f,D) \sqrt{2 / \delta}}{\sqrt{2 \left(d + 2\sqrt{d \log(2 / \delta)} + 2\log(2 / \delta)\right)}}$
\STATE Choose any $\sigma \in (0, \sigma_{\max}]$ (by default, $\sigma \gets \sigma_{\max}$ for a tight certificate; smaller $\sigma$ strictly improves utility)
\STATE Finally, sample noise $Z \sim \mathcal{N}(0, \sigma^2 I_d)$,
\STATE And return noisy judgment $j' \leftarrow j + Z$.
\end{algorithmic}
\end{algorithm}

\paragraph{Lipschitz shrinkage of data} One algorithmic adjustment we make, which is heuristically based on the distributions of judgement scores we observe, is a deterministic Lipschitz shrinkage of the judgement score data around a center point. This shrinkage dampens how much scores can move under small contextual perturbations, so less Gaussian noise is needed to achieve the same $(\tau,\delta)$ guarantee. In general, an $L$-Lipschitz mapping is a standard approach to contracting all neighbor-to-neighbor score differences by at most $L$ in a dataset \cite{boyd2004convex}; here we do this to tighten the A-BB certificate with minimal impact on data point order. More formally, before adding Gaussian noise in \Cref{alg:gauss_mech}, we deterministically post-process scores via any $L$-Lipschitz map $g:\R^d \to \R^d$ (with e.g., affine shrinkage $g(x) = \alpha x + (1 - \alpha) \mu$ with $L = \alpha$). This targets the RMS sensitivity (\Cref{def:rms}): $\Delta_2^{*}(g \circ f, D) \leq L \Delta_2^{*}(f,D)$ (\Cref{lem:lipschitz}). Calibrating $\sigma$ using the same formula as above but replacing $\Delta_2^{*}(f,D)$ by $L \Delta_2^{*}(f,D)$ (or by a direct estimate of $\Delta_2^{*}(g \circ f,D)$) preserves the $(\tau,\delta)$ A-BB certificate while trading a small amount of utility for improved certifiability. We give full formal details on this in \Cref{sec:shrinkage}.

\section{Experiments}
\label{sec:experiments}

\textbf{Benchmark.} We study \textit{Arena-Hard Auto}~\citep{li2024crowdsourceddatahighqualitybenchmarks}, a popular LLM-judged benchmark built from 500 challenging Chatbot Arena queries, intended to approximate Arena preferences. It reports high separability and strong correlation with human rankings, making it an excellent candidate for robust meta-analysis.

\textbf{Judges.} We evaluate four judge models: \textbf{GPT-4o-mini-0718},  \textbf{QwQ-32B}, \textbf{DeepSeek-R1-Distill-32B}, and \textbf{GPT-3.5-Turbo}~\citep{qwen2.5,openai2024gpt4ocard,deepseekai2025deepseekr1incentivizingreasoningcapability}.

\textbf{Combination Logic.} Context-aware sensitivities are derived for formatting ($S_{\mathrm{fmt}}$) and schematic adherence ($S_{\mathrm{sch}}$) by fitting their estimators and, when intrinsic jitter is available, calling $\operatorname{context\_adjusted\_rms}$ to fold that variance into each static estimate. Each component is lower-bounded at $10^{-3}$ before aggregation to avoid passing zero sensitivity. We consider the following combination strategies.

\begin{enumerate}
\item \textbf{Conservative (combined\_abb\_conservative):} $\hat{S}=\max{(S_{\mathrm{fmt}}, S_{\mathrm{psy}}, S_{\mathrm{sch}})}$
\item \textbf{RMS (combined\_abb\_rms):} $\hat{S}=\sqrt{(S_{\mathrm{fmt}}^{2}+S_{\mathrm{psy}}^{2}+S_{\mathrm{sch}}^{2})/3}$
\end{enumerate}

In our experiments, we utilize combined\_abb\_rms.

\noindent\textbf{Hyperparameters.} In the results shown in the main paper, we fix $\tau = 0.05$ and $\texttt{dim}=500$, the number of questions in the Arena-Hard-Auto benchmark.  We explore settings for $\delta$ ranging from $0.01$ to $0.05$.

\noindent\textbf{Inherent jitter.} We run the forward judgment pass five times and compute the RMS sensitivity across runs. This is used to measure the naturally occurring variance in judgment which is not biased by any external factors we wish to control.

\noindent\textbf{Formatting sensitivity.} It has been observed that large language models can be extremely sensitive to prompt formatting~\cite{sclar_quantifying_2023}. This long-standing bias is relatively straightforward to model via a simple generative mechanism that uses LLMs to produce highly similar semantic variants of model responses. We implement such a mechanism and measure the sensitivity in our judges, and find that it is generally fairly large, with considerable variance across judges.

\noindent\textbf{Schematic adherence.} This measures the degree to which an LLM judge's overall judgments can be explained as a function of its per-criteria (factor-wise) judgments~\citep{feuer2025judgmentnoisedesignfailures}. We fit polynomial regression models with interaction terms and define sensitivity as $S_{\text{sch}} = \sqrt{1 - R^2_{\text{schematic}}}$; see \cref{sec:schematic_adherence} for details.

\subsection{Controlling formatting sensitivity bias.}

In figure~\ref{fig:qwq_comparison}, we demonstrate the revised scores generated by BBE after accounting for observed formatting sensitivity in a QwQ-32B judge on Arena-Hard-Auto. The biased judge exhibits a troubling and common trend; high-performing models receive inflated scores while maintaining reasonable confidence intervals, resulting in an overall distribution which contains systematic biases.

\begin{figure*}[htbp]
\centering
\includegraphics[width=0.9\textwidth]{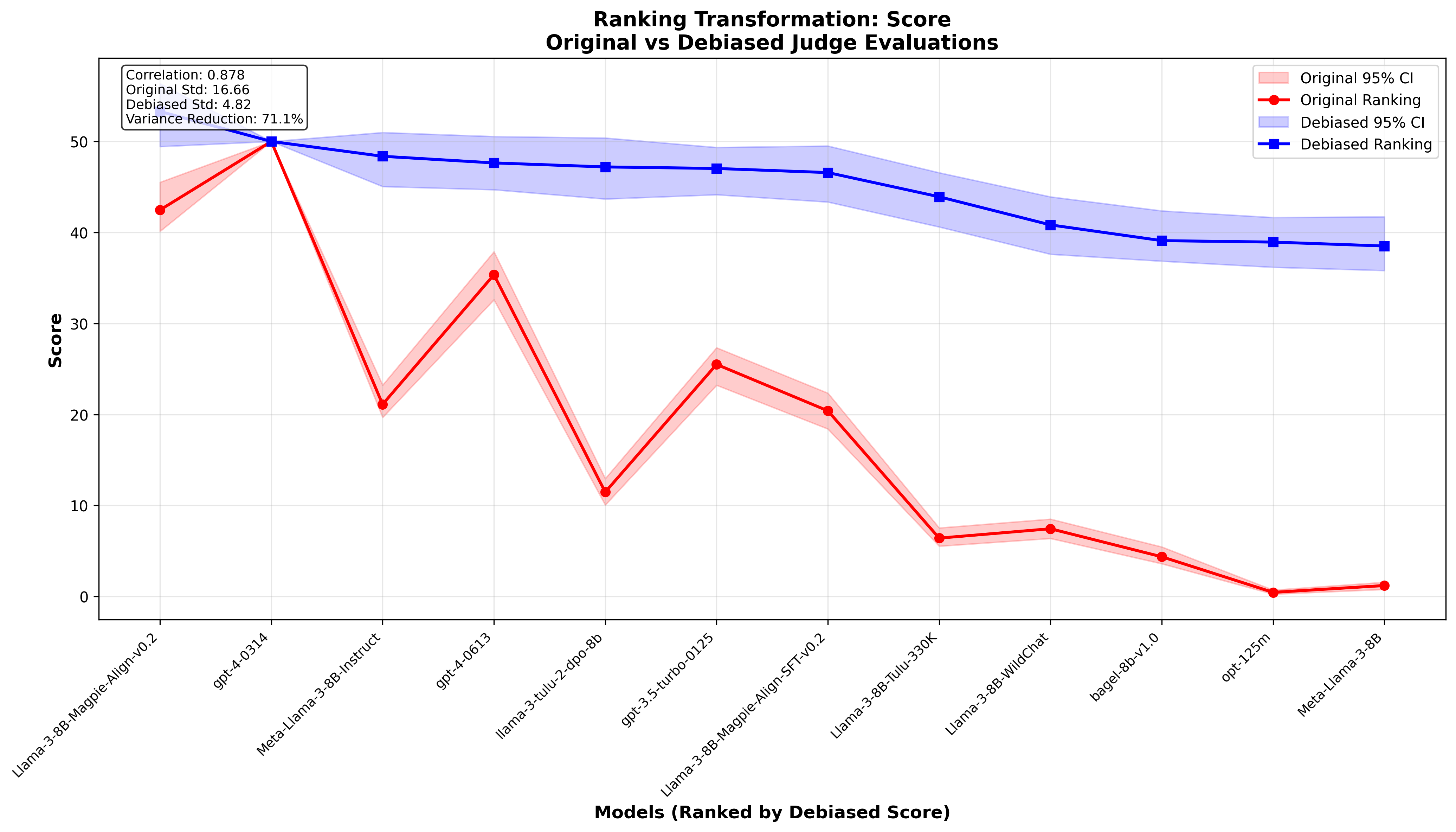}
\caption{\textbf{Bias-bounded transformation for formatting sensitivity.} The blue line in this figure corresponds to the debiased ranking generated by a QwQ-32B judge after using BBE with $\tau=0.5$ to control formatting sensitivity. Even with low $\tau$ tolerance, we are able to retain 88\% correlation with the original judgments in this realistic perturbation setting.}
\label{fig:qwq_comparison}
\end{figure*}

After BBE, we observe substantial variance reduction in the scores, indicating that we have successfully mitigated the judgment bias. At the same time, the debiased judgments retain strong signal, with an 81\% correlation to the original ranking. The apparent "certainty" of extreme judgments (e.g., facebook/opt-125m scoring 0.12 with CI width 0.24) is revealed as bias-induced false confidence, while the compressed scores represent genuine comparative signal with guarantees against failure.

\subsection{Controlling schematic bias.}

\begin{figure*}[htbp]
\centering
\includegraphics[width=0.9\textwidth]{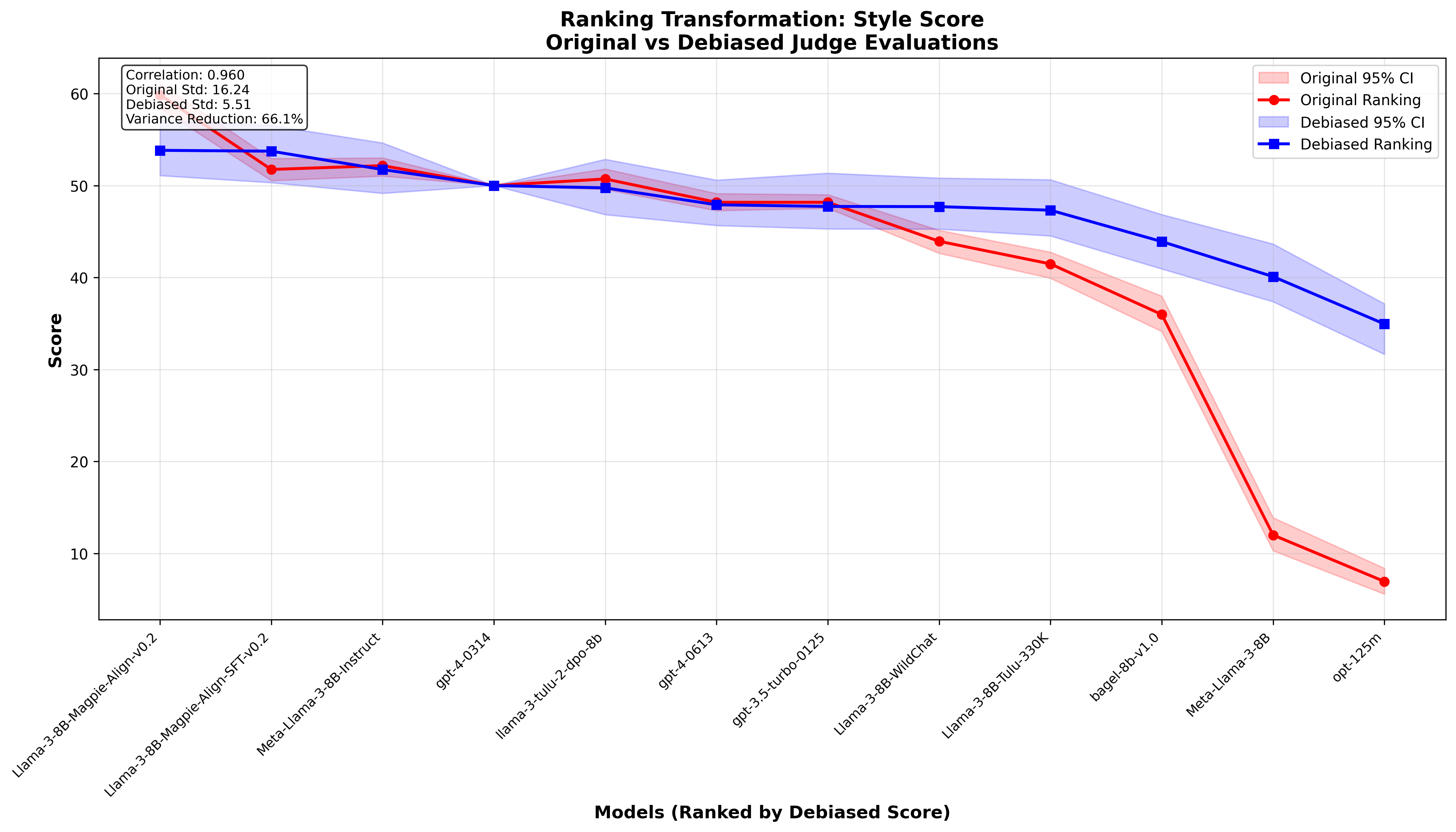}
\caption{\textbf{Bias-bounded evaluation in schematic sensitivity.} Even when measured bias is large, we are able to eliminate much potentially biased variance while retaining near-perfect correlation with the original judgments.($\tau=0.5$)}
\label{fig:gpt35_comparison}
\end{figure*}

Observed scores for schematic bias tend to be much larger than those of formatting bias, as they generally reflect structural weaknesses in the benchmark design rather than quasi-random failures of a particular judge. In \cref{fig:gpt35_comparison}, despite using a relatively old GPT 3.5 judge, the bias-bounded mechanism successfully compresses this extreme distribution into a realistic range while maintaining the ranking correlation nearly perfectly, further illustrating the mechanism's ability to distinguish between genuine performance differences and bias-induced score inflation. When using the more recent GPT 4o Mini as a judge, the correlation becomes nearly perfect.

\begin{figure*}[htbp]
\centering
\includegraphics[width=0.9\textwidth]{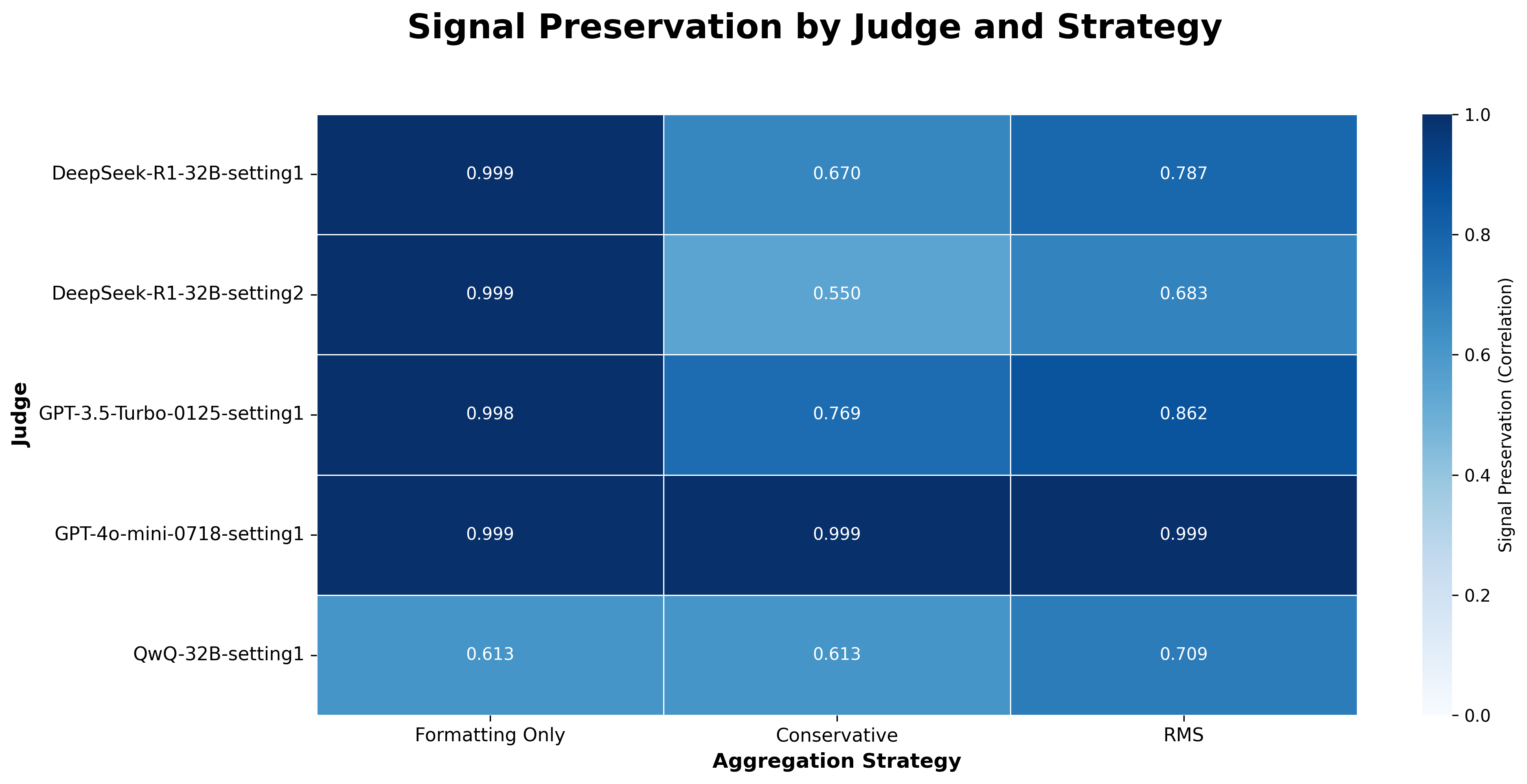}
\caption{\textbf{Correlative strength varies by judge and by dataset.} Although conservative aggregation strategies are always more difficult to debias, and simpler biases such as formatting are consistently easier to debias, other factors, such as the underlying dataset, can also have large effects.}
\label{fig:judge_comparison}
\end{figure*}

\section{Related Work}

Recent work has identified several critical bias issues in LLM judges. The CALM framework \citep{wang2024justice} quantifies 12 distinct bias types in LLM-as-judge systems, while research on ``preference leakage'' \citep{chen2024preference} demonstrates that LLMs exhibit systematic bias toward their own outputs and similar models used in training data. The ``Leaderboard Illusion'' \citep{clark2024leaderboard} provides evidence that prominent evaluation platforms like ChatBot Arena can be gamed by organizations with access to private testing. The ``Limits to Scalable Evaluation at the Frontier'' work \citep{li2024limits} shows that when judge accuracy equals evaluated model accuracy, debiasing methods can only reduce ground truth requirements by half, requiring bias to be small across all evaluated models. Research on individual preferences \citep{santurkar2024individual} demonstrates that different user groups systematically prefer different LLM responses, supporting the need for bias-bounded evaluation that can accommodate diverse judgment patterns.

Specialized benchmarks have emerged to evaluate judge quality, including JudgeBench \citep{zheng2024judgebench} for assessing LLM-based judges, LLMBar \citep{zhou2024llmbar} for instruction following evaluation, and JETTS \citep{wu2024jetts} for evaluating judges as evaluators. SafetyAnalyst \citep{mazeika2024safetyanalyst} represents specialized applications using LLM judgments with structured evaluation through ``harm-benefit trees'' and interpretable weight parameters for aggregating safety assessments.

Closest to our work is the ``Trust or Escalate'' framework \citep{pezeshkpour2024trust}, which provides rigorous guarantees of human agreement using cascaded LLM judges with calibrated confidence thresholds. Trust or Escalate uses ``Simulated Annotators''---ensemble agreement from multiple prompted perspectives---to estimate confidence, then abstains on low-confidence cases or escalates to stronger models. The guarantee is specifically: $P(\text{LLM agrees with human} \mid \text{LLM evaluates}) \geq 1-\alpha$. Crucially, this framework sidesteps bias by \textit{abstaining} when uncertain rather than bounding bias impact, requires human-labeled calibration data, and applies only to pairwise preference evaluation. In contrast, A-BB provides guarantees on \textit{all} evaluations (no abstention), handles adversarially discovered biases whose magnitude is bounded by measured sensitivities, requires no human labels, and extends to general scoring beyond pairwise comparison. We summarize these differences in \cref{tab:comparison}.

\begin{table}[t]
\caption{Comparison of our Average Bias-Boundedness (A-BB) with the best known existing framework for guaranteeing LLM judgments, Trust or Escalate (ToE).}
\label{tab:comparison}
\begin{center}
\begin{footnotesize}
\begin{tabular}{@{}lcc@{}}
\toprule
\textbf{Property} & \textbf{ToE} & \textbf{A-BB} \\
\midrule
Guarantees on all evaluations & \xmark & \cmark \\
Handles unknown biases$^{\dagger}$ & \xmark & \pmark \\
No human labels required & \xmark & \cmark \\
General scoring (beyond pairwise) & \xmark & \cmark \\
Bounds bias impact directly & \xmark & \cmark \\
\midrule
Human agreement guarantee$^{\ddagger}$ & \cmark & \pmark \\
Selective abstention & \cmark & \xmark \\
\bottomrule
\end{tabular}
\end{footnotesize}
\end{center}
\vskip -0.05in
{\scriptsize $^{\dagger}$A-BB bounds unknown biases only if their RMS sensitivity is bounded by that of measured biases.\\
$^{\ddagger}$A-BB can be combined with conformal prediction methods~\citep{ye2025conformal} to obtain human agreement guarantees.}
\vskip -0.1in
\end{table}

\paragraph{Formal guarantees and uncertainty quantification.} Recent work has begun providing statistical guarantees for LLM judge outputs. Conformal prediction has emerged as a distribution-free framework for quantifying judge uncertainty; \citet{ye2025conformal} apply split conformal prediction to construct prediction intervals with finite-sample coverage guarantees, ensuring that intervals contain the true judgment value at a user-specified rate. While conformal methods bound \textit{uncertainty} in individual judgments, our A-BB framework bounds the \textit{impact} of systematic biases across evaluation batches---the two approaches are complementary. On the calibration front, \citet{huang2025calibrating} propose extracting uncertainty signals from LLM internal representations via linear probes, addressing the well-documented overconfidence problem in LLM judges without requiring expensive fine-tuning or architectural modifications.

\paragraph{Scoring bias and agreeableness.} Beyond the bias types catalogued by CALM, recent work has identified additional failure modes. \citet{son2025scoring} define ``scoring bias'' as judgment shifts under superficial perturbations, identifying three specific variants: rubric order bias, score ID bias, and reference answer bias---all of which manifest as sensitivity to factors orthogonal to response quality. Our neighbor generator framework naturally captures such perturbations. Separately, research on agreeableness bias \citep{liu2025consensus} reveals that LLM judges exhibit True Negative Rates below 25\%, indicating systematic over-agreement with evaluated outputs. This tendency to validate rather than critique represents a bias pattern that A-BB can help bound, as it manifests as measurable sensitivity to the presence of errors in evaluated content.

\section{Limitations}

Our bias-bounded framework addresses measurable systematic bias patterns but makes no claims about absolute accuracy, whether scores are calibrated well across multiple judges, or whether judges are actually judging all aspects of their input. The guarantee provided by our framework also can be misinterpreted if the measures used to calibrate bias are not reliable (e.g., if they miss a bias source of greater magnitude than any that is considered). There is, however, considerable ongoing work in these areas, and we see our efforts as complementary to those. Last but not least, the preferences of an LLM judge are themselves a bias pattern (in the sense that they are systematic deviations from random noise); A-BB is not a "magic bullet", and will be much more effective in retaining true signal when the intrinsic jitter and systematic bias of the underlying judge is weaker.

\paragraph{Finite-sample estimation.} Our theoretical guarantees (\Cref{thm:gauss_mech_abb,cor:split_failure_budget}) assume access to the true root-mean-squared sensitivity $\Delta_2^*(f,D)$, which is defined as an expectation over the neighbor generator $T$. In practice, \Cref{alg:gauss_mech} estimates this quantity empirically using $m$ sampled neighbors. With finite $m$, there is a non-zero probability that the empirical estimate underestimates the true sensitivity, which could cause the actual failure probability to exceed $\delta$. Practitioners should mitigate this by using sufficiently large $m$, or by adding a confidence margin (e.g., using an upper confidence bound on the sensitivity estimate). Formally incorporating this estimation uncertainty into the $\delta$ budget via concentration inequalities (e.g., Chernoff bounds) is a direction for future work.

\section{Conclusion}
\label{sec:conclusion}

We have presented a formal framework for providing bias-bounded guarantees in LLM judge mechanisms. Broadly scoped formal guarantees, wedded to the power of LLM judges, enable use cases previously considered challenging, such as conducting social sciences research with the aid of LLMs, or determining a candidate's suitability for a loan. Rather than attempting to enumerate and eliminate every possible bias source on a case by case basis, our insight is that by guaranteeing \textit{any} bias pattern of sufficient magnitude will be  indistinguishable from noise, we can enable greater confidence in LLM judges. The framework is general enough to apply to various LLM evaluation scenarios while providing the mathematical rigor needed for formal bias bounds.

\section*{Reproducibility Statement}

We have, to the best of our ability, ensured that all experiments described in this paper are reproducible in principle. In order to facilitate this, we provide source code containing everything needed to reproduce our experiments at \url{https://github.com/penfever/bias-bounded-evaluation}.

\section*{Impact Statement}

This paper presents work whose goal is to advance the field of Machine Learning, specifically in providing formal guarantees for LLM-based evaluation systems. Our framework for bias-bounded evaluation aims to make AI systems more reliable and trustworthy by quantifying and mitigating systematic biases in LLM judges. While we believe this work has positive implications for the safe deployment of autonomous AI systems, we acknowledge that any evaluation framework can potentially be misused. We encourage practitioners to use this framework as part of a broader evaluation strategy that includes human oversight.

\bibliographystyle{plainnat}
\bibliography{bbe}

\newpage
\appendix

\section{Deferred Proofs}
\label{sec:deferred_proofs}

\GaussMechABB*

\begin{proof}
    Note that $M_\sigma(D) - M_\sigma(D') = \Delta + B$. Thus, for any $a \in (0,\tau)$, the event split,
    \begin{align*}
    \{\|\Delta + B\|_2 > \tau\} \subseteq \{\|\Delta\|_2 > a\} \cup \{\|B\|_2 > \tau - a\}~,
    \end{align*}
    holds via the triangle inequality. So, we can take probabilities of the events and use a union bound to get,
    \begin{align*}
    Pr\left[\| \Delta + B\|_2 > \tau \right] \leq Pr\left[\|\Delta\|_2 > a\right] + Pr\left[\|B\|_2 > \tau - a\right]~.
    \end{align*}
    Consider the $B$ term. Since $B \sim \mathcal N(0, 2\sigma^2 I_d)$, we have $\|B\|_2^2 / (2 \sigma^2) \sim \chi^2_d$. We can thus apply the Laurent–Massart inequality for a chi-squared r.v. with $d$-degrees of freedom \cite{laurent2000adaptive}, which gives that for any $x>0$,
    \begin{align*}
    Pr\left(\|B\|_2 > \sqrt{2 \sigma^2 \left(d + 2\sqrt{dx} + 2x \right)} \right) \leq e^{-x}~.
    \end{align*}
    If we set $x = \log(1 / \delta_B)$, we then get,
    \begin{align*}
    Pr\left[\|B\|_2 > \sqrt{2 \sigma^2(d + 2\sqrt{d \log(1 / \delta_B)} + 2\log(1  / \delta_B))}\right] \leq \delta_B~.
    \end{align*}
    So, letting $\sqrt{2 \sigma^2(d + 2\sqrt{d \log(1 / \delta_B)} + 2\log(1  / \delta_B))} \leq \tau-a$ gives $Pr[\|B\|_2 > \tau - a] \leq \delta_B$.

    Now weconsider the $\Delta$-term. By Markov, we can consider the nonnegative random variable $\|\Delta\|_2^2$ (randomness here is only over $D' \underset{T}{\sim} D$). We get,
    \begin{align*}
    Pr[\|\Delta\|_2>a] = Pr[\|\Delta\|_2^2>a^2] \le \frac{\mathbb{E}\|\Delta\|_2^2}{a^2} = \frac{\Delta_2^{*}(f, D)^2}{a^2}~.
    \end{align*}
    Combining the two bounds completes the proof.
\end{proof}

\SplitFailureBudget*

\begin{proof}
Apply Theorem~\ref{thm:gauss_mech_abb} with $a = \Delta^*_2(f,D) / \sqrt{\delta_\Delta}$.
Then $Pr[\|\Delta\|_2 > a] \leq \delta_\Delta$ by Markov, and choosing
$\sigma \leq \sigma_{\max}$ makes $\Pr[\|B\|_2 > \tau - a] \leq \delta_B$ by Laurent–Massart.
Union bounding then yields $\delta_B + \delta_\Delta$.
\end{proof}

\WhySplittingHelps*

\begin{proof}
This is an algebraic fact. Since $\Delta \in [0,\tau)$, both $\tau - \Delta$ and $\tau + \Delta$ are positive. Then, we expand,
\begin{align*}
\sqrt{\tau^2 - \Delta^2} = \sqrt{(\tau - \Delta)(\tau + \Delta)}~.
\end{align*}
We divide through by $\sqrt{\tau + \Delta}$, which gives
\begin{align*}
\frac{\tau - \Delta}{\sqrt{\tau^2 - \Delta^2}} = \frac{\tau - \Delta}{\sqrt{(\tau - \Delta)(\tau + \Delta)}} = \sqrt{\frac{\tau - \Delta}{\tau + \Delta}}~.
\end{align*}
We know that $\tau + \Delta > \tau - \Delta \geq 0$, thus the fraction under the square root lies in $(0, 1)$ and so its square root is strictly less than $1$ as well.
\end{proof}

\Utility*
\begin{proof}
This is a standard fact of Gaussian noise. By definition $M_\sigma(D) = f(D) + Z$ with $Z \sim \mathcal N(0, \sigma^2 I_d)$. So we have,
\begin{align*}
\|M_\sigma(D) - f(D)\|_2^2 = \|Z\|_2^2 = \sum_{i=1}^d Z_i^2~,
\end{align*}
where each $Z_i$ is i.i.d. $\mathcal N(0, \sigma^2)$. Therefore,
\begin{align*}
\mathbb{E}[\|M_\sigma(D) - f(D)\|_2^2] = \sum_{i=1}^d \mathbb{E}[Z_i^2] = \sum_{i=1}^d Var(Z_i) = d\sigma^2~.
\end{align*}
\end{proof}




\section{Shrinkage Mechanism}
\label{sec:shrinkage}

\paragraph{Setup.} Let $f:\mathcal{D}\to\R^d$ be a (deterministic) scoring functional and let $T$ denote a neighbor generator that induces a distribution over neighbors $D'\sim T(D)$. Recall the RMS average sensitivity
\begin{align*}
\Delta^*_2(f,D) := \left(\E_{D'\sim T(D)}\left[\| f(D) - f(D')\|_2^2\right]\right)^{1/2},
\end{align*}
and the $(\tau,\delta)$ A-BB requirement $\Pr_{D' \sim T(D)}\left[ \|M(D) - M(D')\|_2 > \tau \right]\le \delta$.

\paragraph{Deterministic Lipschitz post-processing.} We consider composing $f$ with a deterministic $L$-Lipschitz map $g: \R^d \to \R^d$ (e.g., affine shrinkage $g(x) = \alpha x + (1 - \alpha) \mu$ with $L = \alpha \in (0,1]$). The following contraction of RMS sensitivity is immediate.

\begin{lemma}[RMS sensitivity contracts under Lipschitz maps]\label{lem:lipschitz}
For any $L$-Lipschitz $g$ and any $f,D$,
\begin{align*}
\Delta^*_2(g\circ f, D) \leq L\Delta^*_2(f,D).
\end{align*}
\end{lemma}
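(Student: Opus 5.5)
The plan is a pointwise-then-integrate argument. The Lipschitz bound will be applied inside the expectation that defines $\Delta^*_2$, and monotonicity of expectation and of the square root will carry the inequality through.

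First, fix any neighbor $D'$ in the support of $T(D)$. Because $g$ is $L$-Lipschitz with respect to $\|\cdot\|_2$ on $\R^d$,
\begin{align*}
\|(g\circ f)(D) - (g\circ f)(D')\|_2 = \|g(f(D)) - g(f(D'))\|_2 \leq L\,\|f(D) - f(D')\|_2 .
\end{align*}
Both sides are nonnegative, so squaring preserves the inequality:
\begin{align*}
\|(g\circ f)(D) - (g\circ f)(D')\|_2^2 \leq L^2\,\|f(D)-f(D')\|_2^2 .
\end{align*}
This holds for every realization of $D'$. Since $g$ is deterministic, the only randomness is the draw of $D'$.

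Next, take expectations over $D'\sim T(D)$. Monotonicity and linearity of expectation give
\begin{align*}
\Delta^*_2(g\circ f,D)^2 \leq L^2\,\Delta^*_2(f,D)^2 .
\end{align*}
Here I will note that $g\circ f$ is measurable, because a Lipschitz map is continuous. I will also note that the right-hand side may be infinite, in which case the claim is trivial. Finally, take square roots, using that $t\mapsto\sqrt t$ is monotone on $[0,\infty]$ and that $L\ge 0$. This yields $\Delta^*_2(g\circ f,D)\le L\,\Delta^*_2(f,D)$.

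There is no real obstacle. The only point needing care is that the Lipschitz constant $L$ must be taken with respect to the same $\ell_2$ norm used in \Cref{def:rms}. For the affine shrinkage $g(x)=\alpha x+(1-\alpha)\mu$, this is immediate: $g(x)-g(y)=\alpha(x-y)$, so the bound holds with equality.
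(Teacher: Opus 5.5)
Your proof is correct and follows the paper's argument: apply the Lipschitz bound pointwise for each realization $D'$, square, take expectations over $D'\sim T(D)$, and take square roots. Your extra remarks on measurability, the infinite case, and equality for affine shrinkage are harmless additions that the paper omits.
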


\begin{proof}
For every realization $D'\sim T(D)$, Lipschitzness gives
$\|g(f(D)) - g(f(D'))\|_2 \leq L\|f(D) - f(D')\|_2$. After squaring, we take expectations over $D'$, and then take square roots, to yield the result.
\end{proof}

\begin{proposition}[A-BB with Lipschitz shrinkage]\label{prop:shrink}
Fix $\tau > 0$ and split $\delta = \delta_B + \delta_\Delta$ with $\delta_B, \delta_\Delta \in (0,1)$. Consider
\begin{align*}
M_\sigma(D)= g(f(D)) + Z,\qquad Z \sim \mathcal{N}(0,\sigma^2 I_d),
\end{align*}
with $g$ $L$-Lipschitz. If
\begin{align*}
\tau > L\Delta^*_2(f,D)\sqrt{\tfrac{1}{\delta_\Delta}},
\end{align*}
then $(\tau,\delta)$-A-BB holds for any $0 < \sigma \leq \sigma_{\max}$ where
\begin{align*}
\sigma_{\max} = \frac{\tau - L\Delta^*_2(f,D)\sqrt{1 / \delta_\Delta}}{\sqrt{2( d + 2\sqrt{d\log(1 / \delta_B)} + 2\log(1 / \delta_B))}}.
\end{align*}
Under the symmetric split $\delta_B=\delta_\Delta=\delta/2$ this specializes to the replacement $\Delta_2^{*}(f,D)\mapsto L\Delta_2^{*}(f,D)$ in \Cref{cor:split_failure_budget}.
\end{proposition}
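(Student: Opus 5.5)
The plan is to reduce the statement to \Cref{cor:split_failure_budget} applied to the composed base judge $\tilde f := g \circ f$, and then to use \Cref{lem:lipschitz} to replace $\Delta^*_2(\tilde f, D)$ by the (possibly larger) quantity $L\Delta^*_2(f,D)$. The composition $\tilde f$ is a deterministic map $\mathcal D \to \R^d$, since $g$ is deterministic and $f$ is assumed deterministic. The mechanism $M_\sigma(D) = \tilde f(D) + Z$ is therefore exactly the Gaussian mechanism of \Cref{thm:gauss_mech_abb} with $f$ replaced by $\tilde f$. The noise $Z, Z'$ remains independent of the draw $D'$, so all the hypotheses of that theorem carry over verbatim.

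The key step is a monotonicity observation about \Cref{thm:gauss_mech_abb}: its Markov step only needs an upper bound on $\E\|\Delta\|_2^2$, not its exact value. Set $\Delta := \tilde f(D) - \tilde f(D')$ and $a := L\Delta^*_2(f,D)/\sqrt{\delta_\Delta}$. By \Cref{lem:lipschitz},
\begin{align*}
\E\|\Delta\|_2^2 = \Delta^*_2(\tilde f,D)^2 \le L^2 \Delta^*_2(f,D)^2 = a^2 \delta_\Delta,
\end{align*}
so Markov gives $\Pr[\|\Delta\|_2 > a] \le \delta_\Delta$. This is where the Lipschitz contraction enters.

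One degenerate case needs care. If $L\Delta^*_2(f,D) = 0$, then $a = 0$, which lies outside the admissible range $a \in (0,\tau)$. In that case $\Delta = 0$ almost surely, and I would handle it directly: either take any small $a > 0$, or simply note that the failure event reduces to $\{\|B\|_2 > \tau\}$. Otherwise the hypothesis $\tau > L\Delta^*_2(f,D)/\sqrt{\delta_\Delta}$ guarantees $a \in (0,\tau)$.

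Next, I would rerun the noise half of \Cref{thm:gauss_mech_abb}. Write $B = Z - Z' \sim \mathcal N(0, 2\sigma^2 I_d)$. Laurent--Massart with $x = \log(1/\delta_B)$, together with $\sigma \le \sigma_{\max}$, where $\sigma_{\max}$ is defined precisely so that $\sqrt{2\sigma^2(d + 2\sqrt{d\log(1/\delta_B)} + 2\log(1/\delta_B))} \le \tau - a$, gives $\Pr[\|B\|_2 > \tau - a] \le \delta_B$. The triangle-inequality event split $\{\|\Delta + B\|_2 > \tau\} \subseteq \{\|\Delta\|_2 > a\} \cup \{\|B\|_2 > \tau - a\}$ and a union bound then yield total failure probability at most $\delta_B + \delta_\Delta = \delta$.

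The symmetric-split remark is pure substitution: with $\delta_B = \delta_\Delta = \delta/2$, the formula above becomes the symmetric corollary with $\Delta^*_2(f,D) \mapsto L\Delta^*_2(f,D)$. I do not expect a real obstacle here. The only point needing care is to state explicitly that \Cref{thm:gauss_mech_abb}'s conclusion holds with $\Delta^*_2$ replaced by any upper bound on it, because that is what allows the cruder $L\Delta^*_2(f,D)$ in place of $\Delta^*_2(\tilde f,D)$.
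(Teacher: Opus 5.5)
Your proposal is correct and is the paper's argument: apply \Cref{thm:gauss_mech_abb} to $g\circ f$ with $a = L\Delta^*_2(f,D)/\sqrt{\delta_\Delta}$, and use \Cref{lem:lipschitz} to bound $\Delta^*_2(g\circ f,D)\le L\Delta^*_2(f,D)$ in the Markov term. Your handling of the degenerate case $L\Delta^*_2(f,D)=0$, which the paper skips, is a useful addition, but only your direct option (the failure event reduces to $\{\|B\|_2>\tau\}$) covers $\sigma=\sigma_{\max}$ exactly, because any fixed $a>0$ requires $\sigma\le(\tau-a)/\sqrt{2(\cdots)}<\sigma_{\max}$.
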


\begin{proof}
Apply \Cref{thm:gauss_mech_abb} to $g\circ f$ and use \Cref{lem:lipschitz} to upper bound $\Delta_2^{*}(g\circ f,D)$ by $L\Delta_2^{*}(f,D)$.
\end{proof}

The concentration bound follows from well-established theory on the tail behavior of Gaussian random vectors in high dimensions~\cite{vershynin2018high}.

\paragraph{Lipschitz contractions (shrinkage).} Consider a deterministic map $g: \R^d\to\R^d$ that is $L$-Lipschitz:
\[
\|g(x)-g(y)\|_2 \;\le\; L\,\|x-y\|_2\quad \text{for all }x,y\in\R^d.
\]
An important special case is affine shrinkage around a center $\mu\in\R^d$ with coefficient $\alpha\in(0,1]$:
\[
g(x) \;=\; \alpha\,x + (1-\alpha)\,\mu,\qquad L=\alpha.
\]
This form of shrinkage is closely related to classical shrinkage estimators in statistics~\cite{james1961estimation,efron1973stein} and regularization techniques in machine learning~\cite{chaudhuri2011differentially}.

\begin{lemma}[RMS sensitivity contracts under Lipschitz maps]\label{lem:lipschitz2}
For any $L$-Lipschitz $g$ and any $f,D$, we have $\Delta^*_2(g\circ f, D) \le L\,\Delta^*_2(f,D)$.
\end{lemma}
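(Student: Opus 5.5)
The plan is to argue pointwise over realizations of the neighbor generator and then pass to the root-mean-square by monotonicity of expectation and of the square root. This is the same argument as for \Cref{lem:lipschitz}; since the statement is identical, I could simply cite that lemma. For completeness I would still spell out the three short steps.

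\textbf{Step 1: pointwise contraction.} Fix the context $D$ and let $D'$ be any realization drawn from $T(D)$. Apply the Lipschitz property of $g$ to the two points $x=f(D)$ and $y=f(D')$ in $\R^d$. This gives
\begin{align*}
\|g(f(D)) - g(f(D'))\|_2 \le L\,\|f(D)-f(D')\|_2 .
\end{align*}
Both sides are nonnegative, so squaring preserves the inequality and gives a pointwise bound with constant $L^2$.

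\textbf{Step 2: average over neighbors.} Take expectations over $D'\sim T(D)$ on both sides of the squared bound, using monotonicity and linearity of expectation. The left side becomes $\Delta^*_2(g\circ f,D)^2$ and the right side becomes $L^2\,\Delta^*_2(f,D)^2$.

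\textbf{Step 3: take square roots.} Since $t\mapsto\sqrt t$ is increasing on $[0,\infty)$ and $L\ge 0$, taking square roots yields the claimed inequality $\Delta^*_2(g\circ f, D) \le L\,\Delta^*_2(f,D)$.

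The only point needing care is measurability and integrability of $\|g(f(D))-g(f(D'))\|_2^2$. Because $g$ is Lipschitz it is continuous, so this composition is measurable. It is also dominated by $L^2\|f(D)-f(D')\|_2^2$. So if $\Delta^*_2(f,D)<\infty$, the left-hand expectation is finite. If instead $\Delta^*_2(f,D)=\infty$, the inequality holds trivially. I do not expect any genuine obstacle beyond this bookkeeping.
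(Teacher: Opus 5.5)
Your proof is correct and matches the paper's argument: apply the Lipschitz bound pointwise for each realization $D'\sim T(D)$, square, take expectations over $D'$, then take square roots. Your remarks on measurability and on the case $\Delta^*_2(f,D)=\infty$ are harmless bookkeeping that the paper leaves implicit.
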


\begin{proof}
By Lipschitzness, for every realization $D'\sim T(D)$,
\[
\|g(f(D)) - g(f(D'))\|_2 \;\le\; L\,\|f(D)-f(D')\|_2.
\]
Squaring and taking expectations over $D'\sim T(D)$ yields
\[
\E\big[\,\|g(f(D)) - g(f(D'))\|_2^2\,\big] \;\le\; L^2\,\E\big[\,\|f(D)-f(D')\|_2^2\,\big].
\]
Taking square roots gives the claim.
\end{proof}

\begin{proposition}[Shrinkage A-BB via Lipschitz contraction]\label{prop:shrink2}
Fix $\tau>0$ and a split $\delta=\delta_B+\delta_\Delta$ with $\delta_B,\delta_\Delta\in(0,1)$. Let $g$ be $L$-Lipschitz and consider the Gaussian mechanism $M_\sigma(D)= g(f(D)) + Z$ with $Z\sim\mathcal{N}(0,\sigma^2 I_d)$. If
\[
\tau \;>\; L\,\Delta^*_2(f,D)\,\sqrt{\tfrac{1}{\delta_\Delta}},
\]
then $(\tau,\delta)$-A-BB holds for
\[
\sigma \;=\; \frac{\;\tau - L\,\Delta^*_2(f,D)\,\sqrt{1/\delta_\Delta}\;}{\sqrt{\,2\,\big( d + 2\sqrt{d\,\log(1/\delta_B)} + 2\,\log(1/\delta_B)\big)\,}}\,.
\]
In particular, for shrinkage $g(x)=\alpha x + (1-\alpha)\mu$ with $L=\alpha$, the precondition becomes $\tau>\alpha\,\Delta^*_2(f,D)\,\sqrt{1/\delta_\Delta}$, which is feasible for any target $\tau>0$ by choosing $\alpha\le \tau/(\Delta^*_2(f,D)\,\sqrt{1/\delta_\Delta})$. Under the symmetric split $\delta_B = \delta_\Delta = \delta/2$, this specializes to $\tau > \alpha \Delta^*_2(f,D)\sqrt{2 / \delta}$.
\end{proposition}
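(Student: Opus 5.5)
The plan is to reduce \Cref{prop:shrink2} to \Cref{thm:gauss_mech_abb}, applied to the composed deterministic judge $\tilde f := g \circ f$, with the Lipschitz contraction handled by \Cref{lem:lipschitz2}. Since $g$ is deterministic, $\tilde f$ is a deterministic judge $\mathcal{D} \to \mathbb{R}^d$. The mechanism $M_\sigma(D) = \tilde f(D) + Z$ is therefore exactly the Gaussian mechanism of \Cref{thm:gauss_mech_abb} with $f$ replaced by $\tilde f$. The noises $Z, Z'$ remain independent of each other and of the draw $D' \sim T(D)$.

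The first step is to choose the threshold $a := L\,\Delta^*_2(f,D)/\sqrt{\delta_\Delta}$. The hypothesis $\tau > L\Delta^*_2(f,D)\sqrt{1/\delta_\Delta}$ gives $a < \tau$. If $\Delta^*_2(f,D) > 0$ then $a > 0$, so $a \in (0,\tau)$ as the theorem requires. The degenerate case $\Delta^*_2(f,D) = 0$ needs separate handling. There $\tilde\Delta := \tilde f(D) - \tilde f(D') = 0$ almost surely, so we work directly from the union-bound argument in the proof of \Cref{thm:gauss_mech_abb}, replacing $a$ by any small $a' > 0$ and letting $a' \downarrow 0$. Alternatively, we can note that only the $B$-term remains, and then $\Pr[\|B\|_2 > \tau] \le \delta_B \le \delta$.

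Next, the stated $\sigma$ equals $(\tau - a)/\sqrt{2(d + 2\sqrt{d\log(1/\delta_B)} + 2\log(1/\delta_B))}$, which is precisely the $\sigma_{\max}$ of \Cref{thm:gauss_mech_abb} at this $a$. Hence the admissibility condition holds with equality, and the theorem yields $(\tau,\delta')$-A-BB with
\[
\delta' = \delta_B + \Delta^*_2(\tilde f,D)^2/a^2 .
\]
By \Cref{lem:lipschitz2}, $\Delta^*_2(\tilde f,D) \le L\Delta^*_2(f,D)$, so
\[
\Delta^*_2(\tilde f,D)^2/a^2 \le L^2\Delta^*_2(f,D)^2 \cdot \delta_\Delta/(L^2\Delta^*_2(f,D)^2) = \delta_\Delta .
\]
Therefore $\delta' \le \delta_B + \delta_\Delta = \delta$. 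Since the A-BB condition is monotone in $\delta$, this proves the main claim.

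The remaining statements are algebraic. For $g(x) = \alpha x + (1-\alpha)\mu$ we have $\|g(x) - g(y)\|_2 = \alpha\|x - y\|_2$, so $L = \alpha$, and the precondition becomes $\tau > \alpha\Delta^*_2(f,D)\sqrt{1/\delta_\Delta}$. This holds whenever $\alpha < \tau/(\Delta^*_2(f,D)\sqrt{1/\delta_\Delta})$. At the boundary value $\alpha = \tau/(\cdots)$ the strict inequality fails and $\sigma = 0$, so the statement's ``$\le$'' should be read as strict, or equality should be handled as the $\sigma \to 0$ limit. I expect this boundary case, together with the degenerate $\Delta^*_2 = 0$ case, to be the only real obstacle, since the core argument is a direct substitution. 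Finally, substituting $\delta_\Delta = \delta/2$ gives $\sqrt{1/\delta_\Delta} = \sqrt{2/\delta}$, which yields the symmetric-split form.
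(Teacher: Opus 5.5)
Your proof is correct and follows the paper's own route: you apply \Cref{thm:gauss_mech_abb} to the deterministic judge $g\circ f$ with $a = L\,\Delta_2^*(f,D)/\sqrt{\delta_\Delta}$, so the stated $\sigma$ is exactly $\sigma_{\max}$, and you use \Cref{lem:lipschitz2} to bound the Markov term by $\delta_\Delta$. Your two edge-case points are also right, and the paper glosses over both: the case $a=0$ lies outside the theorem's range $a\in(0,\tau)$ and is handled by your direct Laurent--Massart argument (it arises whenever $L\,\Delta_2^*(f,D)=0$, including $L=0$, not only when $\Delta_2^*(f,D)=0$), and the feasibility condition on $\alpha$ must be strict, since equality forces $\sigma=0$.
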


\begin{proof}
By Lemma~\ref{lem:lipschitz2}, $\Delta^*_2(g\circ f, D)\le L\,\Delta^*_2(f,D)$. Apply the Gaussian A-BB bound (Proposition~1 in the main text; cf. \cite{dwork2006calibrating,balle2018improving}) to $g\circ f$ with sensitivity upper bound $L\,\Delta^*_2(f,D)$, the same $(\tau,\delta_B,\delta_\Delta)$, and obtain the stated $\sigma$.
\end{proof}

\paragraph{Remarks.}
\begin{itemize}
  \item Shrinkage trades utility for certifiability: smaller $\alpha$ reduces the effective sensitivity and allows certifying smaller $\tau$; it also attenuates variation in outputs.
  \item Non-expansive projections $g$ (e.g., L2 projection onto a ball~\cite{bauschke2017convex}) have $L\le 1$ and can deterministically bound output diameters; one can combine such $g$ with small $\sigma$ if randomized outputs are desired.
  \item The same reasoning extends to other noise families provided appropriate tail bounds; the Gaussian case is convenient due to spherical symmetry and well-known concentration~\cite{dong2022gaussian}.
\end{itemize}

\paragraph{Choosing the shrinkage center $\mu$.} The choice of $\mu$ affects both utility and the certified bound.
If $\mu$ is fixed (data-independent), Lemma~\ref{lem:lipschitz2} applies directly. If $\mu$ depends on the dataset, define $g_D(x)=\alpha x + (1-\alpha)\mu(D)$. For one-step neighbors $D,D'$, we have
\[
\|g_D(f(D)) - g_{D'}(f(D'))\|_2 \;\le\; \alpha\,\|f(D)-f(D')\|_2 + (1-\alpha)\,\sqrt{d}\,|\mu(D)-\mu(D')|.
\]
Taking RMS over $D'\sim T(D)$ yields an effective sensitivity bound
\[
\Delta^{\text{eff}}_2 \;\le\; \alpha\,\Delta^*_2(f,D) + (1-\alpha)\,S_\mu,\qquad S_\mu := \sqrt{d}\cdot \E\big[\,|\mu(D)-\mu(D')|\,\big].
\]
For the sample mean on $[0,1]$, $|\mu(D)-\mu(D')| \le 1/n$ for $n$ samples, hence $S_\mu \le \sqrt{d}/n$. Thus the precondition with adaptive $\mu$ becomes
\[
\tau \;>\; \big(\alpha\,\Delta^*_2(f,D) + (1-\alpha)\,S_\mu\big)\,\sqrt{\tfrac{1}{\delta_\Delta}}.
\]
Solving for $\alpha$ in normalized units (letting $A=\Delta^*_2\sqrt{1/\delta_\Delta}$ and $B=S_\mu\sqrt{1/\delta_\Delta}$) gives $\alpha < (\tau - B)/(A - B)$ provided $A>B$. In practice, for large $n$ and $d\approx n$, $B\approx 1/\sqrt{n}$ is small. Data-driven yet \emph{fixed-within-batch} centers (e.g., holdout means, exponential moving averages frozen per batch, or profile means) recover the $S_\mu=0$ case since $\mu$ does not change under the neighbor in that batch.

\paragraph{Center strategies.} Suitable choices include: (i) fixed/public centers (profile means) for $S_\mu=0$, (ii) holdout means computed on a disjoint subset (unchanged by a one-item neighbor in the evaluation subset), and (iii) EMA centers updated between batches but held fixed during certification~\cite{hazan2016introduction}. Robust centers (median/trimmed means~\cite{huber2009robust}) remain admissible with the same form and a typically small $S_\mu$. For high-dimensional covariance shrinkage, similar principles apply with appropriate modifications~\cite{ledoit2004well}.

\paragraph{Coverage of unmodeled biases and conservative calibration.}
Let $\mathcal{S}$ be a family of neighbor generators (potential biases). Suppose we calibrate a single bound
\[
 B \;\ge\; \sup_{S\in\mathcal{S}} \Delta^*_2\big(f,D; S\big).
\]
Using Proposition~\ref{prop:shrink2} with $\Delta^*_2(f,D)$ replaced by $B$ yields an $(\tau,\delta)$-A-BB certificate for \emph{every} $S\in\mathcal{S}$ as soon as $\tau > L\,B\,\sqrt{1/\delta_\Delta}$, with the same Gaussian calibration of $\sigma$. In particular, if we have a finite set of measured generators $\{T_i\}_{i=1}^m$, then choosing
\[
 B \;=\; \max_{i\in[m]} \Delta^*_2\big(f,D; T_i\big)
\]
certifies against the worst measured bias and any $S\in\mathcal{S}$ whose RMS sensitivity is no larger than this envelope. This includes any convex mixture $\sum_i w_i T_i$ because
\[
 \Delta^*_2\!\left(f,D;\sum_i w_i T_i\right)^2
 \;=\; \sum_i w_i\, \E_{D'\sim T_i(D)}\big[\,\| f(D) - f(D')\|_2^2\,\big]
 \;\le\; \max_i \Delta^*_2\big(f,D;T_i\big)^2.
\]
We refer to this choice $B = \max_i \Delta^*_2(f,D;T_i)$ as the \emph{conservative} calibration: it trades more shrinkage/noise for coverage of all measured biases and any dominated (``no larger'') unknown bias under the same RMS metric. This approach is related to multiple testing corrections~\cite{benjamini1995controlling} and privacy composition theory~\cite{kairouz2015composition}.

If the shrinkage center $\mu$ depends on the dataset, replace $B$ by the effective bound
\[
 B_{\mathrm{eff}} \;=\; \alpha\,B + (1-\alpha)\,S_\mu,
\]
and require $\tau > B_{\mathrm{eff}}\,\sqrt{1/\delta_\Delta}$ (cf. the discussion above). Using a center that is fixed within the certified batch (public, holdout, or frozen EMA) recovers $S_\mu = 0$ and the simpler conservative condition.

\section{Schematic Adherence}
\label{sec:schematic_adherence}

Schematic adherence measures the degree to which an LLM judge's overall judgments can be explained as a function of its per-criteria (factor-wise) judgments~\citep{feuer2025judgmentnoisedesignfailures}. Low schematic adherence indicates that the judge's verdicts deviate from the stated rubric in ways that cannot be captured by any weighting of the explicit factors.

\paragraph{Measurement.} Given a judge that produces factor scores $f_{i1}, \ldots, f_{ik}$ and an overall verdict $o_i$ for each sample $i$, we fit two regression models:

\textbf{Linear model:}
\[
o_i = \beta_0 + \sum_{j=1}^{k} \beta_j f_{ij} + \varepsilon_i
\]

\textbf{Polynomial model (with interactions):}
\[
o_i = \beta_0 + \sum_{j=1}^{k} \beta_j f_{ij} + \sum_{j=1}^{k} \beta_{jj} f_{ij}^2 + \sum_{j < \ell} \beta_{j\ell} f_{ij} f_{i\ell} + \varepsilon_i
\]

The schematic adherence score is defined as:
\[
R^2_{\text{schematic}} = \max(R^2_{\text{linear}}, R^2_{\text{polynomial}})
\]

\paragraph{Sensitivity metric.} For bias-bounded evaluation, we convert schematic adherence into a sensitivity measure:
\[
S_{\text{sch}} = \sqrt{1 - R^2_{\text{schematic}}}
\]

This quantifies the fraction of judgment variance attributable to factors outside the explicit rubric---precisely the ``implicit bias'' that A-BB aims to bound. Empirically, $R^2_{\text{schematic}}$ ranges from 0.10 (DeepSeek-R1-32B) to 0.74 (GPT-4o-mini), indicating substantial variation in how faithfully different judges follow their stated evaluation criteria~\citep{feuer2025judgmentnoisedesignfailures}.

\end{document}